\newtheorem{theorem}{Theorem}[section]
\newtheorem{lemma}[theorem]{Lemma} 
\newtheorem{proposition}[theorem]{Proposition}
\newtheorem{remark}{Remark}[section]
\DeclareMathOperator{\tr}{tr}
\DeclareMathOperator{\diag}{diag}
\begin{document}

\title{ROIPCA: An online memory-restricted PCA algorithm based on rank-one updates}

\author[1]{Roy Mitz\thanks{roymitz@mail.tau.ac.il} }
\author[1]{Yoel Shkolnisky\thanks{yoelsh@post.tau.ac.il}}
\affil[1]{\footnotesize{School of Mathematical sciences, Tel-Aviv University, Tel-Aviv, Israel}}
\date{}

\maketitle

\begin{abstract}

Principal components analysis (PCA) is a fundamental algorithm in data analysis. Its memory-restricted online versions are useful in many modern applications, where the data are too large to fit in memory, or when data arrive as a stream of items. In this paper, we propose ROIPCA and fROIPCA, two online PCA algorithms that are based on rank-one updates. While ROIPCA is typically more accurate, fROIPCA is faster and has comparable accuracy. We show the relation between fROIPCA and an existing popular gradient algorithm for online PCA, and in particular, prove that fROIPCA is in fact a gradient algorithm with an optimal learning rate. We demonstrate numerically the advantages of our algorithms over existing state-of-the-art algorithms in terms of accuracy and runtime.
\end{abstract}

\textbf{Key Words.} PCA; online PCA; streaming PCA; memory-restricted PCA; gradient descent; rank-one update

\section{Introduction}

Principal components analysis (PCA)~\cite{jolliffe2016principal} is a popular method in data science. Its main objective is reducing the dimension of data, while preserving most of their variance. Specifically, PCA finds the orthogonal directions in space where the data exhibit most of their variance. These directions are called the principal components of the data. In the classical setting, also called batch PCA or offline PCA, the input to the PCA algorithm is a set of vectors $\{ x_i \}_{i=1}^n \in \mathbb{R}^d$, which are centred to have mean zero in each coordinate. PCA then solves the following optimization problem. The first principal component, denoted here by $v_1 \in \mathbb{R}^d$, is the solution to
\begin{equation} \label{pca}
v_1 = \operatornamewithlimits{argmax}_{\norm{v} = 1} \sum_{i=1}^{n} (v^T x_i)^2.
\end{equation}
The other components are obtained iteratively, similarly to~\eqref{pca}, while additionally requiring orthogonality of each component to the components already calculated, namely,
\begin{equation} \label{pca2}
v_k = \operatornamewithlimits{argmax}_{\substack{\norm{v} = 1 \\ v \perp v_1,\ldots,v_{k-1}}} \sum_{i=1}^{n} (v^T x_i)^2.
\end{equation}

Examples for applications of PCA include algorithms for clustering such as k-means \cite{arthur2007k}, regression algorithms such as linear regression~\cite{seber2012linear} and SVR~\cite{smola2004tutorial}, and classification algorithms, such as SVM~\cite{burges1998tutorial} and logistic regression~\cite{menard2002applied}. Additionally, PCA is known to denoise the data, which is by itself a good reason to use it as a pre-processing step for the data before any further analysis.

Classical PCA is typically implemented via the eigenvalue decomposition of the sample covariance matrix of the data, or the singular value decomposition (SVD) of the centred data itself. These decompositions are computed using algorithms that generally require $O(nd\min (n,d))$ floating point operations, where~$n$ is the number of data points and~$d$ is the dimension of the data. In many cases, however, the data are low-rank and the number of required principal components is much smaller than~$d$. If~$m$ is the number of required principal components, approximate PCA decompositions, using algorithms such as the truncated SVD, can be computed in $O(ndm)$ floating point operations~\cite{halko2011algorithm,halko2011finding}. 

In terms of memory requirements, all batch PCA algorithms require the entire dataset to fit either in the random-access memory (RAM) or in an external storage, such as a disk drive, resulting in $O(nd)$ space complexity. Algorithms based on the covariance matrix require an additional $O(d^2)$ space to store it. When $d$ is large, it may be infeasible to store the $O(d \times d)$ covariance matrix. Because of its time and space complexity, batch PCA is essentially infeasible for large datasets. In the big data era, fast and accurate alternatives to the classical batch PCA algorithms are essential.

In the memory-restricted online PCA setting, the vectors $\{ x_i \}_{i=1}^n \in \mathbb{R}^d$ are presented to the algorithm one by one. For each vector presented, the algorithm updates its current estimate of the principal components without recalculating them from scratch. In particular, the memory requirements of the algorithm are not allowed to grow with $n$, and are usually restricted to only $O(md)$, which is proportional to the memory required to store the principal components themselves. 

The online setting is useful in various scenarios. Examples for such scenarios include cases where the data are too large to fit memory, when working with data streams whose storage is not feasible, or if the computation time of batch PCA for each new point is too long for the task at hand.

Online PCA algorithms usually start with an initial dataset $X_0 \in \mathbb{R}^{n_0 \times d}$, where rows correspond to data points and columns to features. The principal components of this dataset are calculated by any batch PCA algorithm and are used as the basis for the online algorithm. Then, the online algorithm processes new data points one at a time and updates the principal components accordingly. Typically, online PCA algorithms only estimate the few top principal components of the data.

Many algorithms for online PCA were proposed in the literature. A survey of some of these algorithms is given in~\cite{cardot2018online}. There are four classes of algorithms for online PCA. The first and broadest class of algorithms is based on stochastic gradient descent (SGD). The classical SGD algorithm for online PCA is called Oja's algorithm~\cite{oja1982simplified,oja1992principal}, and most SGD online PCA methods are based on it. A related algorithm, called the generalized Hebbian algorithm (GHA)~\cite{sanger1989optimal}, performs essentially the same updates as the SGD methods, but differs in its re-orthogonalization process, see~\cite{cardot2018online} for details. Computationally, each iteration of the above methods requires $O(md)$ arithmetic operations, which makes them computationally attractive.

All SGD-based PCA algorithms require a learning rate parameter. Some methods such as CCIPCA~\cite{weng2003candid} and AdaOja~\cite{henriksen2019adaoja} auto-tune it, but it is usually determined heuristically~\cite{lv2006global, darken1992learning} via some form of a grid-search. The asymptotic convergence properties of SGD-based methods depend on the learning rate, and are well understood for the case $m = 1$~\cite{oja1982simplified}. Several works of a  theoretical nature consider the problem of determining the learning rate and provide some guarantees for that case, for example~\cite{allen2017first,jain2016streaming,shamir2015stochastic, shamir2016convergence} and the references therein. Many of them rely on information that is not available in the online setting, such as the eigengap of the covariance matrix.  

The second, and more recent, class of methods for online PCA is essentially an adaptation of SGD-based methods to update the principal components over blocks of data, rather than one sample at a time~\cite{mitliagkas2013memory, li2016rivalry, yang2018history}. The original block method~\cite{mitliagkas2013memory} is proved to converge under some data distributions, but the convergence depends on the block size. The methods of this class thus require setting a block size parameter, which may not be an easy task.

The third class of online PCA methods is based on the assumption that the data are low-rank. Under that assumption, the IPCA algorithm~\cite{arora2012stochastic} updates the principal components using an exact closed-form formula which does not require any parameter tuning.  While providing good approximation to the principal components (as we shall see later), this method's time complexity is $O(m^2d)$.

The fourth and most recent class of methods is based on matrix sketching. The works of~\cite{karnin2015online} and \cite{ghashami2016frequent} provide theoretical analysis for their online approximation schemes, but these methods bear little practical use.

Online kernel PCA has also gained attention recently. While not being the main focus of this paper, we wish to outline the main works of this research field. Most online kernel PCA algorithms adjust any of the aforementioned algorithms to the kernel settings, by applying them on the relevant kernel matrix rather than on the sample covariance matrix. The algorithms proposed in~\cite{gunter2007fast, honeine2011online} are adaptations of the generalized Hebbian algorithm to the kernel settings. Other gradient-based adaptations are~\cite{zhang2016stochastic, xie2015scale}. Other works, such as~\cite{chin2006incremental, chin2007incremental, takeuchi2007efficient, tokumoto2011fast, joseph2016online, zhao2019two} essentially use the IPCA algorithm's updating scheme. Methods based on sketching~\cite{ghashami2016streaming} and random features~\cite{ullah2018streaming} were also proposed. We will later describe briefly how our proposed methods can be adjusted to the online kernel PCA setting as well. While kernel PCA with a linear kernel is essentially just a regular PCA, it is inappropriate for the memory-restricted online setting, as the dimension of the matrices involved in kernel methods is equal to the number of samples rather than the number of features. We will thus not include online kernel PCA methods in the numerical section.

In this paper, we propose two approaches for online PCA. The first one, named ROIPCA, is based on a closed-form formula for eigenvectors update, and is thus exact under certain assumptions. The second approach, named fROIPCA, is approximate, but faster and is proven to be optimal in some sense. More concretely, we prove that fROIPCA is in fact an SGD-based method with an optimal learning rate in the sense that it minimizes the error obtained after one iteration. To the best of our knowledge, this is the first practical gradient method which is proved to be optimal is some sense. Contrary to most other online PCA methods, both of our approaches do not require any parameter tuning. They also meet the $O(md)$ memory constraint of the restricted-memory setting. We show using simulations that our approaches are more accurate than the existing ones, and are usually faster.
 
The rest of this paper is organized as follows. In Section~\ref{sec:rank_one}, we describe the rank-one update problem, which is used as a building block in our algorithms. In Sections~\ref{sec:our_algo} and~\ref{sec:froipca}, we present the ROIPCA and fROIPCA algorithms, respectively. In Section~\ref{sec:discussion}, we discuss several implementation details. In Section~\ref{sec:numerical}, we illustrate numerically the advantages of our approach for both synthetic and real data.

\section{Rank-one updates}  \label{sec:rank_one}

We denote by $X = [x_1x_2 \cdots x_d]$ a matrix whose columns are $x_{1},\ldots,x_{d}$, and by $X^{(m)} = [x_1\cdots x_m]$ its truncated version consisting only of its first $m$ columns, $m \leq d$. Let $A$ be a $d \times d$ real symmetric matrix with real (not necessarily distinct) eigenvalues $\ell_1 \ge \ell_2 \ge \cdots \ge \ell_d$ and associated orthonormal eigenvectors $u_1,\ldots,u_d$. Using this notation, the eigendecomposition of $A$ is given by $A = U\Lambda U^T$, with $U = [u_1u_2 \cdots u_d]$ and $\Lambda = \diag (\ell_1, \ldots,\ell_d)$. The problem of (symmetric) rank-one update is to find the eigendecomposition of
\begin{equation} \label{eq:prob}
A + \rho vv^T, \quad \rho \in \mathbb{R}, \quad  v \in \mathbb{R}^d,  \quad  \norm{v} = 1.
\end{equation}
We denote the eigenvalues of~\eqref{eq:prob} by $\gamma_1 \ge \gamma_2 \ge \cdots \ge \gamma_d$ and their associated orthogonal eigenvectors by $s_1,\ldots,s_d$, that is $A + \rho vv^T = S\Gamma S^T$, with $S = [s_1s_2 \cdots s_d]$ and $\Gamma = \diag (\gamma_1, \ldots,\gamma_d)$.
The relation between the eigendecompositions before and after a rank-one update is well-studied, e.g., \cite{bunch1978rank, ding2007eigenvalues}. Without loss of generality, we further assume that $\ell_1 > \ell_2 > \cdots > \ell_d$ and that for $z = U^Tv$ we have $z_j \neq 0$ for all $1 \leq j \leq d$. The deflation process in~\cite{bunch1978rank} transforms any update~\eqref{eq:prob} to satisfy these assumptions.  Given the eigendecomposition $A = U\Lambda U^T$, the updated eigenvalues $\gamma_1,\ldots,\gamma_d$ of $A + \rho v v^T$ are given by the $d$ roots of the secular equation~\cite{bunch1978rank}
\begin{equation} \label{eqn:secular_equation}
w(t) = 1 + \rho \sum_{k=1}^{d}\frac{z_k^2}{\ell_k - t} , \quad  z = U^Tv.
\end{equation}
The eigenvector corresponding to the $i$-th root (eigenvalue)~$t_i$ is given by the explicit formula
\begin{equation}  \label{eqn:EigenvaectorFormula}
s_i = \frac{\tilde{s}_i}{\norm{\tilde{s}_i}} , \quad \tilde{s}_i = U\Delta_i^{-1}z,
\quad  z = U^Tv , \quad  \Delta_i = \Lambda - \gamma_i I,
\end{equation}
which is equivalent to
\begin{equation}  \label{eq:formula_as_sum}
\tilde{s}_i = \sum_{k=1}^{d} \frac{\langle u_k, v \rangle}{\ell_k - \gamma_i}u_k .
\end{equation}

\subsection{Rank-one update with partial spectrum}  \label{sec:ro_partial}
Formulas~\eqref{eqn:secular_equation} and~\eqref{eqn:EigenvaectorFormula} assume that the entire spectrum of the matrix $A$ is known. In various settings, as we will see in the next section, only the top~$m$ eigenvalues and eigenvectors of~$A$ are known, making the classical formulas inapplicable in these cases. The authors in~\cite{mitz2019symmetric} provide adjusted formulas for rank-one updates with partial spectrum, summarized in the following propositions.

\begin{proposition}[Rank-one update with partial spectrum \cite{mitz2019symmetric}] \label{prop:first_order}
Let $\mu \in \mathbb{R}$ be a fixed parameter (described below). The eigenvalues of~\eqref{eq:prob} are approximated by the roots of the first order truncated secular equation
\begin{equation} \label{eqn:TSE}
w_{1}(t ; \mu) = 1 + \rho \left( \sum_{k=1}^{m} {\frac{z_k^2}{\ell_k - t}} + \frac{1 - \sum_{k=1}^{m}{z_k^2}}{\mu - t} \right).
\end{equation}
The error in each approximated eigenvalue is of magnitude $ O \big( \max_{m+1 \leq j \leq d} \abs{\ell_j - \mu} \big) $. Furthermore, the eigenvectors of~\eqref{eq:prob} are approximated by the formula
\begin{equation}\label{eqn:TEF}
\begin{aligned} 
\hat{s}_i &= U^{(m)}(\Delta_i^{(m)})^{-1}(U^{(m)})^Tv + \frac{1}{\mu - \gamma_i}r,  \\ \quad  r &= v - U^{(m)} (U^{(m)})^Tv ,
\end{aligned}
\end{equation}
for $1 \leq i \leq m$. The error in each approximated eigenvector is of magnitude $ O \big( \max_{m+1 \leq j \leq d} \abs{\ell_j - \mu} \big) $.
\end{proposition}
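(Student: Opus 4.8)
\emph{Approach.} The plan is to derive both approximations directly from the exact formulas~\eqref{eqn:secular_equation} and~\eqref{eqn:EigenvaectorFormula}, by replacing every unknown tail eigenvalue $\lambda_j$, $m+1\le j\le n$, with the single surrogate value $\mu$, and then to bound the error as a perturbation in $\mu$. For the secular equation, split $w(t)=1+\rho\sum_{i=1}^{m}\frac{z_i^2}{\lambda_i-t}+\rho\sum_{i=m+1}^{n}\frac{z_i^2}{\lambda_i-t}$; since $\norm{z}=\norm{Q^Tv}=\norm{v}=1$ we have $\sum_{i=m+1}^{n}z_i^2=1-\sum_{i=1}^{m}z_i^2$, so substituting $\lambda_j\mapsto\mu$ in the tail collapses it to $\frac{1-\sum_{i=1}^{m}z_i^2}{\mu-t}$, which is exactly $w_1(t;\mu)$. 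For the eigenvectors, expand $Q\Delta_k^{-1}z=\sum_{i=1}^{n}\frac{z_i}{\lambda_i-t_k}q_i$; the first $m$ terms equal $Q^{(m)}(\Delta_k^{(m)})^{-1}(Q^{(m)})^Tv$, and the same substitution in the remaining terms gives $\frac{1}{\mu-t_k}\sum_{i=m+1}^{n}z_iq_i=\frac{1}{\mu-t_k}\big(v-Q^{(m)}(Q^{(m)})^Tv\big)=\frac{1}{\mu-t_k}r$, which is~\eqref{eqn:TEF} up to the final normalization.

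\emph{Eigenvalue error.} Set $\epsilon:=\max_{m+1\le j\le n}\abs{\lambda_j-\mu}$. A direct computation gives $w(t)-w_1(t;\mu)=\rho\sum_{i=m+1}^{n}z_i^2\,\frac{\mu-\lambda_i}{(\lambda_i-t)(\mu-t)}$. I would first argue that, for $k\le m$, the root $t_k$ lies in an interval bounded away from all poles $\lambda_{m+1},\dots,\lambda_n,\mu$ of both secular equations: by the interlacing property (for $\rho>0$) $t_k>\lambda_k\ge\lambda_m$, whereas $\lambda_j\le\lambda_{m+1}$ for $j\ge m+1$ and $\mu$ lies in (or near) the range of the discarded eigenvalues, so each of $\abs{\lambda_j-t_k}$, $\abs{\mu-t_k}$ is at least a fixed multiple of the spectral gap $\lambda_m-\lambda_{m+1}$. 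Hence on a fixed closed interval $I\ni t_k$ the denominators above are bounded below and, using $\sum_{i>m}z_i^2\le1$, one gets $\abs{w(t)-w_1(t;\mu)}\le C_1\epsilon$ on $I$ with $C_1$ depending only on $\rho$ and that gap. Since $t_k$ is a simple root of $w$ and $w$ is strictly monotone between consecutive poles, $\abs{w'}\ge c_0>0$ on $I$, so $\abs{w(t_k\pm\delta)}\ge c_0\delta$ whenever $t_k\pm\delta\in I$; taking $\delta$ a small multiple of $\epsilon$ makes $c_0\delta>C_1\epsilon$, so $w$ and $w_1(\cdot;\mu)$ agree in sign at $t_k\pm\delta$, and since $w$ changes sign across $t_k$ so does $w_1(\cdot;\mu)$. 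The intermediate value theorem then produces a root $\widetilde{t_k}$ with $\abs{\widetilde{t_k}-t_k}<\delta=O(\epsilon)$, and counting the $m+1$ poles of $w_1(\cdot;\mu)$ shows these are precisely its top $m$ roots.

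\emph{Eigenvector error.} With $t_k$ held exact, the difference between the exact unnormalized eigenvector $Q\Delta_k^{-1}z$ and the approximant constructed above is $\sum_{i=m+1}^{n}z_i\Big(\frac{1}{\lambda_i-t_k}-\frac{1}{\mu-t_k}\Big)q_i=\sum_{i=m+1}^{n}z_i\,\frac{\mu-\lambda_i}{(\lambda_i-t_k)(\mu-t_k)}\,q_i$; by orthonormality of $q_1,\dots,q_n$ its norm is at most $\epsilon\big(\sum_{i>m}\frac{z_i^2}{(\lambda_i-t_k)^2(\mu-t_k)^2}\big)^{1/2}=O(\epsilon)$, using the same gap bounds. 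Because $\norm{Q\Delta_k^{-1}z}\ge\abs{z_k}/\abs{\lambda_k-t_k}>0$, the elementary fact that normalizing two vectors that are $O(\epsilon)$-close and bounded away from zero in norm yields $O(\epsilon)$-close unit vectors transfers the estimate to the normalized formula~\eqref{eqn:TEF}. If in addition one replaces $t_k$ by $\widetilde{t_k}$ in~\eqref{eqn:TEF}, the eigenvalue bound together with the smoothness of $\Delta_k^{-1}$ away from the poles contributes only a further $O(\epsilon)$, leaving the stated bound unchanged.

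\emph{Main obstacle.} Every estimate above rests on the denominators $\lambda_i-t_k$ ($i>m$) and $\mu-t_k$ being bounded away from zero \emph{uniformly} in the approximation, which is exactly where the hypotheses on $\mu$ and on the gap $\lambda_m-\lambda_{m+1}$ enter; this is the delicate point. In particular, for $\rho<0$ the interlacing pushes the $m$-th updated eigenvalue $t_m$ toward $\lambda_{m+1}$, so one must either restrict to $\rho>0$ (as in the online-PCA application, where each update adds variance) or explicitly quantify the loss of accuracy near the truncation index.
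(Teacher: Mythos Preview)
The paper does not actually prove Proposition~\ref{prop:first_order}; it merely quotes the result from~\cite{mitz2019symmetric}. So there is no in-text argument to compare against, and your task reduces to whether your derivation stands on its own.

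Your approach is the natural one and is essentially correct. The reduction of~\eqref{eqn:secular_equation} to~\eqref{eqn:TSE} via $\sum_{i>m}z_i^2=1-\sum_{i\le m}z_i^2$ and the substitution $\lambda_j\mapsto\mu$ is exactly how one arrives at the truncated secular equation, and your expansion $Q\Delta_k^{-1}z=\sum_i\frac{z_i}{\lambda_i-t_k}q_i$ together with $\sum_{i>m}z_iq_i=v-Q^{(m)}(Q^{(m)})^Tv=r$ gives~\eqref{eqn:TEF} directly. The perturbation argument for the roots (uniform bound on $|w-w_1|$ plus a lower bound on $|w'|$ near $t_k$, then intermediate value) is standard and sound.

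Two minor comments. First, the lower bound $|w'|\ge c_0$ does not follow from monotonicity alone: $w'(t)=\rho\sum_i z_i^2/(\lambda_i-t)^2$ could be small if $t_k$ sits far from every pole and all $z_i$ are small; you should either make the dependence of $c_0$ on $\min_i|z_i|$ (or on $z_k$ specifically) explicit, or invoke that the big-$O$ constants are allowed to depend on the fixed data $(\Lambda,z,\rho)$. Second, you correctly flag the $\rho<0$ issue, but note that even for $\rho>0$ the bound $|\mu-t_k|\gtrsim\lambda_m-\lambda_{m+1}$ requires $\mu\le\lambda_{m+1}+O(\epsilon)$, which is an implicit assumption on $\mu$ (satisfied by all three choices discussed in the paper) that is worth stating. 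Neither point is a gap in the argument, just a place where the hidden constants should be named.
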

Formulas~\eqref{eqn:TSE}--\eqref{eqn:TEF} are based on a parameter $\mu$. Several options were proposed in~\cite{mitz2019symmetric} for choosing~$\mu$. When the data are known to be low-rank, choose $\mu = 0$. Otherwise, choose
\begin{equation} \label{eq:mu_mean}
\mu_{mean} = \frac{\sum_{k=m+1}^d \ell_k}{d-m} = \frac{\operatorname{tr}(A) - \sum_{k=1}^{m} \ell_k}{d - m} ,
\end{equation}
which is the mean of the unknown eigenvalues. For more details see~\cite{mitz2019symmetric}. In Section~\ref{sec:choose_mu}, we will discuss the choice of $\mu$ that is most suitable to our setting. There exist second order formulas for rank-one update, which are more accurate~\cite{mitz2019symmetric}. However, their implementation in the online PCA setting will require the storage of the entire covariance matrix, and thus these formulas are less suitable for this work.

\subsection{Fast rank-one update with partial spectrum}  \label{sec:fast_update}

The most computationally expensive part of the rank-one update procedure described in the previous section is the eigenvectors formula~\eqref{eqn:TEF}, which requires $O(m^2d)$ floating point operations. In this section, we introduce a new rank-one update procedure for the eigecnvectors, whose complexity is only $O(md)$ operations. We demonstrate in Section~\ref{sec:numerical} that this faster procedure maintains accuracy similar to the procedure described in Section~\ref{sec:ro_partial}. This approximation formula is summarized in the following proposition.

\begin{proposition}[Fast rank-one update with partial spectrum]\label{prop:fast extension}
Let $1 \leq i \leq m$ and let $\tau_i \in \mathbb{R}$ be a fixed parameter. The eigenvectors of~\eqref{eq:prob} are approximated by the fast first order eigenvectors formula
\begin{equation}\label{eqn:fast_TEF}
\begin{aligned} 
\hat{s}_i &= \bigg( \frac{1}{\ell_i - \gamma_i} - \tau_i \bigg)  \langle u_i , v  \rangle u_i + \tau_i (v - r) + \frac{1}{\mu - \gamma_i}r , \\
r &= v - U^{(m)} (U^{(m)})^Tv ,
\end{aligned}
\end{equation}
with an error term
\begin{equation} \label{eqn:err_term_tau}
O \bigg(\max_{1 \leq j \leq m, j \neq i} \abs{\frac{1}{\ell_j - \gamma_i} - \tau_i} + \max_{m+1 \leq j \leq d} \abs{\ell_j - \mu} \bigg) .
\end{equation}
\end{proposition}

The proof of Proposition~\ref{prop:fast extension} is given in Appendix~\ref{app1}.

The error term~\eqref{eqn:err_term_tau} is an upper bound for the error. In Appendix~\ref{app1}, we obtain a tighter bound that is minimised when $\tau_i =\tau^*_i $, with
\begin{equation} \label{eqn:best_tau}
\tau^*_i = \frac{\sum_{k=1 , k\neq i}^{m} \frac{z_k^2}{\ell_k - \gamma_i}}{\sum_{k=1 , k\neq i}^{m} z_k^2}.
\end{equation}

\section{Online PCA based on rank-one updates}\label{sec:Online PCA based on rank-one updates}

We assume that the input data points are already centred.  If that is not the case, we refer the reader to~\cite{honeine2011online} for an online centering procedure. Our algorithms are based on the following observation. Let $X_n$ be an $n \times d$ matrix whose rows correspond to data points. Let $x_{n+1} \in \mathbb{R}^d$ be a new data point, and denote by $X_{n+1}$ the matrix whose upper $n\times d$ submatrix is $X_n$ and whose last row is $x_{n+1}$. Denote by $\hat{x}_{n+1}$ the normalized $x_{n+1}$, i.e., $\hat{x}_{n+1} = \frac{x_{n+1}}{\norm{x_{n+1}}}$. Then, 
\begin{align} \label{eq:XTX}
X_{n + 1}^TX_{n + 1} &= X_{n}^TX_{n} + x_{n+1}x_{n+1}^T \\
&= X_{n}^TX_{n} + \norm{x_{n+1}}^2 \hat{x}_{n+1} \hat{x}_{n+1}^T.
\end{align}
Equivalently, recalling that
\begin{equation} \label{eq:XTXs}
\text{cov}(X_{n}) = \frac{1}{n}X_{n}^TX_{n} ,
\end{equation}
we have that
\begin{equation} \label{eq:covs}
\text{cov}(X_{n+1}) = \frac{n}{n+1} \bigg( \text{cov}(X_{n}) + \frac{\norm{x_{n+1}}^2}{n}\hat{x}_{n+1} \hat{x}_{n+1}^T \bigg) .
\end{equation}
Here, the covariance matrix of the data, $\text{cov}(X_{n})$, replaces~$A$ of~\eqref{eq:prob}, $\frac{\norm{x_{n+1}}^2}{n}$ replaces $\rho$, and $\hat{x}_{n+1}$ replaces~$v$. We conclude that introducing a new data point to the covariance matrix is, up to a multiplicative constant, a rank-one update to the original covariance matrix. The proof of~\eqref{eq:covs} is straightforward. Thus, in order to compute the PCA in an online fashion, one can use any of the rank-one update formulas from Section~\ref{sec:rank_one}.

A common assumption in the literature is that high-dimensional data lie in a lower dimensional subspace. Therefore, similarly to the IPCA algorithm~\cite{arora2012stochastic}, in subsequent analysis, we assume that the data are of rank $m < d$, and compute only the~$m$ leading principal components of the data. We denote by $(t_i^n, p_{i}^{n})$ the $i$'th eigenpair of $X_n^TX_n$ for $1 \leq i \leq m$. Analogously, We denote by $(\lambda_i^n, q_{i}^{n})$ the approximated $i$'th eigenpair of $X_n^TX_n$ for $1 \leq i \leq m$.

In the following sections, we present our two online PCA algorithms based on rank-one updates, termed ROIPCA and fROIPCA.
 
\subsection{ROIPCA}  \label{sec:our_algo}

The ROIPCA algorithm is derived by using the first order approximations~\eqref{eqn:TSE} and~\eqref{eqn:TEF} to update the principal components, and by choosing the parameter~$\mu$ to be either $\mu = 0$ or $\mu=\mu_{mean}$ (see~\eqref{eq:mu_mean}). The update step of ROIPCA is

\begin{equation} \label{eqn:ROIPCA_update}
     q_i^{n+1} = \sum_{k=1}^{m} \frac{\langle q_k^{n} , x_{n+1} \rangle}{\lambda_k^{n} - \lambda_i^{n + 1} } q_k^n + \frac{1}{\mu - \lambda_i^{n + 1}} \Bigg( x_{n+1} - \sum_{k=1}^{m} \langle q_k^{n} , x_{n+1} \rangle q_k^{n} \Bigg),
\end{equation}

\noindent followed by normalization. The approximated eigenvalues $\{ \lambda_i^{n+1} \}_{i=1}^m$ required for this step are calculated by finding the roots of~\eqref{eqn:TSE}.

ROIPCA is covariance free, that is, does not require to store in memory the covariance matrix of the data. The ROIPCA algorithm is summarized in Algorithm~\ref{alg:onlne_pca}.

\begin{algorithm}
\caption{ROIPCA and fROIPCA- covariance free rank-one online PCA}
\label{alg:onlne_pca}
\begin{algorithmic}[1]
\Require $m$ leading eigenpairs $\{(\lambda_i,q_i)\}_{i=1}^m$ of $X_0^TX_0$, where $X_0$ is the initial dataset.
\Ensure An approximation $ \{(\widetilde{\lambda}_i,\widetilde{q}_i)\}_{i=1}^m$ of the eigenpairs of $X_n^TX_n$, where $X_n$ is the data matrix after $n$ updates.
\State Set $B = \tr{ \big( X_0^TX_0 \big) }$
\For { $\{{x_i}\}_{i=1}^n$ }
\State Set $\rho = \norm{x_i}^2$ \qquad \Comment{See~\eqref{eq:prob}}
\State Set $v = x_i / \norm{x_i}$ \qquad \Comment{See~\eqref{eq:prob}}
\State Set $z = Q^Tv$
\State Choose a parameter $\mu$ (i.e., $\mu=0$ or $\mu_{mean}$\eqref{eq:mu_mean})
\If{$\mu = \mu_{mean}$}
\State $\mu = \frac{B - \sum_{i=1}^{m}\lambda_i}{d - m}$
\State $B = B + \norm{x_i}^2$
\EndIf
\State Calculate the $m$ largest roots  $\{(\widetilde{\lambda}_i\}_{i=1}^m$ of the truncated secular equation \eqref{eqn:TSE} \\
\begin{equation} \label{eqn:TSE_alg}
w_{1}(t ; \mu) = 1 + \rho \left( \sum_{k=1}^{m} {\frac{z_k^2}{\lambda_k - t}} + \frac{1 - \sum_{k=1}^{m}{z_k^2}}{\mu - t} \right).
\end{equation}
\For { $\{{q_i}\}_{i=1}^m$ }
\State  For ROIPCA: find $\widetilde{q}_i$ using~\eqref{eqn:ROIPCA_update} \\
\begin{equation} \label{eqn:ROIPCA_update_alg}
      \widetilde{q}_i = \sum_{k=1}^{m} \frac{\langle q_k , v \rangle}{\lambda_k - \tilde{\lambda}_i} q_k + \frac{1}{\mu - \tilde{\lambda}_i} \Bigg( v - \sum_{k=1}^{m} \langle q_k , v \rangle q_k \Bigg) .
\end{equation}
\State  For fROIPCA: find $\widetilde{q}_i$ using~\eqref{eqn:froipca_update} \\
\begin{equation} \label{eqn:froipca_update_alg}
\widetilde{q}_i = q_i + \frac{1}{ \langle q_i , v \rangle^2}\bigg(  \frac{\lambda_i -\tilde{\lambda}_i}{\mu - \tilde{\lambda}_i} \bigg) \left( \langle q_i , v \rangle v -\langle q_i , v \rangle \sum_{k=1}^m \langle q_k , v \rangle q_k \right ) .
\end{equation}
 \EndFor
\State Set $ \tilde{q}_i = \frac{\tilde{q}_i}{\norm{\tilde{q}_i}}$ for $1 \leq i \leq m$
\State Set
$q_i = \tilde{q}_i$ and $\lambda_i = \tilde{\lambda}_i$ for $1 \leq i \leq m$
\EndFor
\State \Return $ \{(\widetilde{\lambda}_i,\widetilde{q}_i)\}_{i=1}^m$
\end{algorithmic}
\end{algorithm}

We now discuss the error of the algorithm proposed in this section. When the sample covariance matrix is low-rank (and practically, "close" to being low-rank), it is clear by the error terms in Proposition~\ref{prop:first_order} that ROIPCA is accurate for all proposed choices of~$\mu$. The low-rank assumption is relevant for many datasets where the intrinsic dimension of the data is much lower than the number of features. This assumption is also used in the theoretical analysis of other state-of-the-art algorithms for online PCA such as IPCA.

When the data are not low-rank, the error analysis of ROIPCA is more complicated, and is only given for one step in Proposition~\ref{prop:first_order} above. To the best of our understanding, no other online PCA algorithm accounts for this case. While we also do not provide an asymptotic error analysis for this case, we note that ROIPCA is the only algorithm with a mechanism for accounting for the unknown eigenvalues via the parameter $\mu$. Furthermore, we demonstrate numerically in Section~\ref{sec:num_acc} that when the data are not low-rank, the introduction of $\mu$ improves performance significantly compared to all other existing methods.

\subsection{fROIPCA}  \label{sec:froipca}

While ROIPCA provides excellent accuracy, its runtime is $O(m^2d)$ per iteration (see Section~\ref{sec:complexity}). This runtime is asymptotically the same as that of the IPCA algorithm~\cite{arora2012stochastic}, but is slower than SGD-based methods whose runtime is $O(md)$ per iteration.  In this section, we derive a faster version of ROIPCA, termed fROIPCA, and show that it can be regarded as an SGD-based method with an optimal learning rate in a sense defined below.

\subsubsection{Deriving an optimal learning rate}

The general Hebbian algorithm for online PCA, a classical SGD method, is given by the update rule~\cite{sanger1989optimal}
\begin{equation} \label{eqn:general_grad}
q_i^{n+1} = q_i^n + \eta_n \left( \langle q_i^{n} , x_{n+1} \rangle x_{n+1} - \langle q_i^{n} , x_{n+1} \rangle \sum_{k=1}^{i} \langle q_k^{n} , x_{n+1} \rangle q_k^n \right ),
\end{equation}
followed by normalization, where $\eta_n > 0$ is a learning rate which is typically determined heuristically. Instead, we suggest to use all~$m$ known eigenvectors in an update rule of the form
\begin{equation} \label{eqn:general_grad_ext}
q_i^{n+1} = q_i^n + \eta_n \left( \langle q_i^{n} , x_{n+1} \rangle x_{n+1} - \langle q_i^{n} , x_{n+1} \rangle \sum_{k=1}^m \langle q_k^{n} , x_{n+1} \rangle q_k^n \right ),
\end{equation}
\noindent followed by normalization.

\begin{remark} \label{rem:scale_not_imp}
In the analysis to follow, we will consider several update rules of the form of~\eqref{eqn:general_grad_ext}. Since the scale of the resulting vector is arbitrary, we will scale all update rules of this form so that the $q_i^n$ component that is not affected by the learning rate will have $1$ as its coefficient.  

\end{remark}

While the update rule~\eqref{eqn:general_grad_ext} is not of the form of SGD-based methods~\eqref{eqn:general_grad}, as it uses all~$m$ eigenvectors, we next prove that it is  superior to them for all choices of $\eta_n > 0$, as stated by the following proposition.

\begin{proposition} \label{prpo:better}
Let $1 \leq i \leq m$. Assume that the approximated eigenvectors at step $n$ are accurate, i.e., that $q_k^n = p_k^n$ for $1 \leq k \le m$. Denote by $w_i^{n+1}$ the accurate $i$'th eigenvector in step $n+1$ scaled according to Remark~\ref{rem:scale_not_imp}. Denote by~$u_i^1$ the result of evaluating the right hand side of~\eqref{eqn:general_grad}, and by~$u_i^2$ the result of evaluating the right hand side of~~\eqref{eqn:general_grad_ext}. Then, $\norm{w_i^{n+1} - u_i^2} \leq \norm{w_i^{n+1} - u_i^1}$ for all $\eta_n > 0$.
\end{proposition}

The proof of Proposition~\ref{prpo:better} is given in Appendix~\ref{app2}.

Furthermore, the update rule~\eqref{eqn:general_grad_ext} enables us to derive an optimal choice for $\eta_n$ for a single update step, as stated by the following proposition.

\begin{proposition}{(Optimal learning rate for SGD-methods)} \label{prpo:the_best}
Assume that $X_n^TX_n$ is of rank $m<d$ and let $1 \leq i \leq m$. Assume that $q_k^n = p_k^n$  and $\lambda_k^n = t_k^n$ for $1 \leq k \le m$. The learning rate $\eta_n$ in~\eqref{eqn:general_grad_ext} that minimizes $\norm{p_i^{n+1} - q_i^{n+1}}$ is
\begin{equation} \label{eqn:opr_eta}
\eta^*_n = \frac{1}{\langle q_i^{n} , x_{n+1} \rangle^2}\bigg( 1 - \frac{t^n_i}{t^{n+1}_i} \bigg).
\end{equation}
\end{proposition}

The proof of Proposition~\ref{prpo:the_best} is given in Appendix~\ref{app3}.

\noindent In Section~\ref{sec:num_opt} we demonstrate Proposition~\ref{prpo:the_best} numerically.

\subsubsection{Asymptotic behaviour of the optimal learning rate}

A sufficient condition for the convergence of the generalized Hebbian algorithm to the principal components of the data is~\cite{balsubramani2013fast}
\begin{equation} \label{eqn:suf_cond}
    \sum_{n=1}^{\infty} \hat{\eta}_n = \infty \quad \text{and} \quad \sum_{n=1}^{\infty} \hat{\eta}_n^2 < \infty .
\end{equation}

Consequently, most authors suggest a learning rate $\hat{\eta}_n \propto \frac{1}{n}$, which somewhat artificiality fulfils these criteria, and needs to be manually tuned to improve performance. Our suggested learning rate~\eqref{eqn:opr_eta} arises naturally from data-dependent quantities, does not require any tuning, and was proven in Proposition~\ref{prpo:the_best} to be optimal in some sense. We now prove that under mild assumptions, the learning rate~\eqref{eqn:opr_eta}  satisfies \eqref{eqn:suf_cond}.

\begin{proposition}{(Asymptotic behaviour of the optimal learning rate)} \label{prop:froipca_conv}
Let $\{ x_1, x_2, ...\} \in \mathbb{R}^d$ be drawn i.i.d. from some distribution with mean $\mu_0$ and a covariance matrix $\Sigma$, so that $\norm{x_n} = O_p(1)$ for all $n \in \mathbb{N}$, where $O_p(\cdot)$ is big-O notation in probability. Assume further that for all $n \in \mathbb{N}, 1 \leq i \leq m$, $\abs{ \langle q_i^n , x_{n+1} \rangle} \geq c$ for some constant $c > 0$. Then, the optimal learning rate~\eqref{eqn:opr_eta} satisfies~\eqref{eqn:suf_cond} with high probability.
\end{proposition}

The proof of Proposition~\ref{prop:froipca_conv} is given in Appendix~\ref{ap_conv}.

\subsubsection{The fROIPCA algorithm}

We now present the update rule of the fROIPCA algorithm. The fROIPCA algorithm is derived using the fast first order approximation~\eqref{eqn:fast_TEF} to update the principal components, by setting $\tau_i = 0$ and $v=x_{n+1}$, and by choosing the parameter~$\mu$ to be either $\mu = 0$ or $\mu=\mu_{mean}$ (see~\eqref{eq:mu_mean}). Explicitly, following Remark~\ref{rem:scale_not_imp}, we obtain the update rule
\begin{equation} \label{eqn:froipca_update}
q_i^{n+1} = q_i^n + \frac{1}{ \langle q_i^{n} , x_{n+1} \rangle^2}\bigg(  \frac{\lambda^n_i - \lambda^{n+1}_i}{\mu - \lambda^{n+1}_i} \bigg) \left( \langle q_i^{n} , x_{n+1} \rangle x_{n+1} - \langle q_i^{n} , x_{n+1} \rangle \sum_{k=1}^m \langle q_k^{n} , x_{n+1} \rangle q_k^n \right ),
\end{equation}

\noindent followed by normalization. The approximated eigenvalues $\{ \lambda_i^{n+1} \}_{i=1}^m$ required for this step are calculated by finding the roots of~\eqref{eqn:TSE}.

This update rule is essentially the iterative rule~\eqref{eqn:general_grad_ext} with the optimal learning rate~\eqref{eqn:opr_eta}, with two exceptions. First, since the true eigenvalues of $X_n^TX_n$ are generally unknown, we replace them by their approximations obtained by finding the roots of~\eqref{eqn:TSE}. Second, similarly to ROIPCA, we introduce the parameter $\mu$. When $\mu = 0$, we obtain the update rule~\eqref{eqn:general_grad_ext} with the optimal learning rate~\eqref{eqn:opr_eta}.

The fROIPCA algorithm is summarized in Algorithm~\ref{alg:onlne_pca}.

\section{Implementation details}  \label{sec:discussion}

\subsection{Time and memory complexity}  \label{sec:complexity}


Based on the analysis given in~\cite{mitz2019symmetric}, the time complexity of each iteration of Algorithm~\ref{alg:onlne_pca} using the eigenvectors formula~\eqref{eqn:TEF} is $O(m^2d)$. The time complexity of the eigenvectors formula~\eqref{eqn:fast_TEF} is $O(md)$. All ROIPCA algorithms need to store the $m$ eigenvectors being updated, requiring $O(md)$ memory. A theoretical comparison of the algorithms' time and space complexity is given in Table~\ref{tbl:complex_comp}.

\subsection{Choosing and updating $\mu$} \label{sec:choose_mu}

We next discuss the effects that the choice of~$\mu$ (see Proposition~\ref{prop:first_order}) has on the time and space complexity of Algorithm~\ref{alg:onlne_pca}. Choosing $\mu = 0$ requires no additional calculations.

Choosing $\mu = \mu_{mean}$ of~\eqref{eq:mu_mean} requires only $\operatorname{tr}(X_n^TX_n)$, and can thus be calculated on-the-fly, since
\begin{equation} \label{eqn:update_mu}
\operatorname{tr}(X_{n+1}^TX_{n+1}) = \operatorname{tr}(X_n^TX_n) + \norm{x_{n+1}}^2.
\end{equation}
Thus, $\mu_{mean}$ can be updated on-the-fly without additional memory.

\section{Numerical results} \label{sec:numerical}

In this section, we support our theoretical claims numerically and compare our algorithms to other online PCA algorithms in terms of accuracy and runtime. We use both synthetic and real-world datasets, as will be described in the following sections. Our datasets are taken from the UCI Machine Learning Repository~\cite{Dua:2019} and the MNIST dataset~\cite{deng2012mnist}, as described in Table~\ref{tbl:data sets}.

The MATLAB code to reproduce the graphs in this section is found in \\ \href{https://github.com/ShkolniskyLab/ROIPCA}{\texttt{github.com/ShkolniskyLab/ROIPCA}}.

\begin{table}
\centering
\begin{tabular}{|l|l|p{10cm}|}
\hline
Name              & Dimension & Description                                                                                                                                                           \\ \hline
MNIST~\cite{deng2012mnist}            & 784       & Each sample is a grey scale image of a handwritten digit between zero and nine.                                                                                        \\ \hline
Superconductivity~\cite{hamidieh2018data} & 81        & Each sample contains 81 features extracted from one of 21263 superconductors.                                                                                                \\ \hline
Poker~\cite{Cattral:2002}             & 10        & Each sample is a hand consisting of five playing cards drawn from a standard deck of 52 cards. Each card is described using two attributes (suit and rank).           \\ \hline
\end{tabular}
\caption{Real-world datasets used.}
\label{tbl:data sets}
\end{table} 

\subsection{Optimality of the learning rate} \label{sec:num_opt}

In this section, we provide numerical evidence for Proposition~\ref{prpo:the_best}. We generate $n = 600$ normally distributed samples with mean zero and a covariance matrix of rank $m=1$, and calculate the first principal component of the first 100 samples. We generate 500 values of $\eta$ that are logarithmically spaced between $10^{-5}$ and $10$. Then, given a new sample, we perform one step of the update rule~\eqref{eqn:general_grad_ext} for each of the values of $\eta$. Denote by $q$ the resulting vector. Denote by $p$ the true principal component of the current $501$ samples. Let $\eta^*$ the optimal learning rate~\eqref{eqn:opr_eta} for this sample. We plot the error $\log_{10}{\norm{p - q}}$ as a function of $\log_{10} \frac{\eta}{\eta^*}$. We repeat this process for all 500 samples. The result is presented in Figure~\ref{fig:proof_opt}. We can see that indeed, a minimum is obtained at $\log_{10} \frac{\eta}{\eta^*} = 0$, as expected.

\begin{figure}
    \centering
        \includegraphics[width=.7\textwidth]{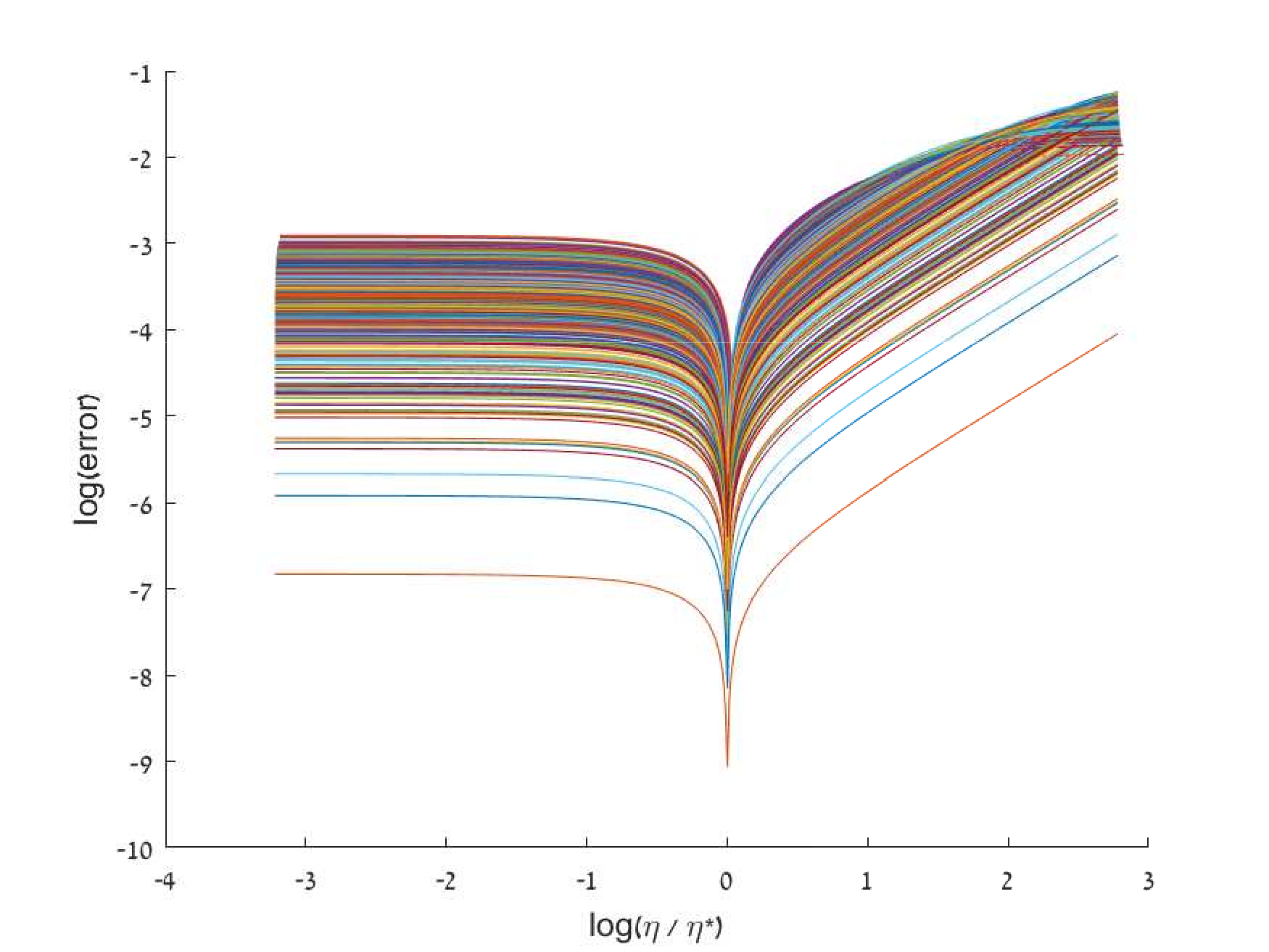}
        \caption{Error after one update as a function of deviation from the optimal learning rate. We can see that indeed, minima are achieved at the optimal learning rate~\eqref{eqn:opr_eta}.} \label{fig:proof_opt}
\end{figure}

\subsection{Accuracy} \label{sec:num_acc}

In this section, we compare several online PCA algorithms on both synthetic and real-world datasets, and demonstrate the superiority of our approach. A comprehensive comparison between online PCA algorithms is given in~\cite{cardot2018online}, which concludes that the method of choice is either IPCA or CCIPCA. We will compare ROIPCA and fROIPCA to our own implementations of IPCA~\cite{arora2012stochastic}, CCIPCA~\cite{weng2003candid}, Oja's rule~\cite{oja1982simplified} and GHA~\cite{sanger1989optimal}. We found the FrequentDirections algorithm~\cite{ghashami2016frequent} not competitive and we will thus not report its results.

We start by comparing the accuracy of all methods. In each experiment, a dataset is randomly generated (the datasets will be described shortly). The tested online PCA algorithms are initialised by a batch PCA of $n_0 = 500$ samples, and are then run on additional $n = 10000$ samples, resulting in an estimated subspace~$\tilde{U}$. We denote by~$U$ the subspace spanned by the $m$ largest eigenvectors of the sample covariance matrix of the entire data set consisting of $n_0 + n$ data points. The parameter $m$ is chosen in each experiment to account for 80\% of the variance of the data, but no more than 10. The metric we use to compare the results of the different algorithms is the relative error in the estimation of~$U$. More precisely, we measure the subspace estimation error by
\begin{equation}\label{eq:error measure}
L(U, \tilde{U}) = \frac{\norm{U^TU - \tilde{U}^T\tilde{U}}_F^2}{\norm{U^TU}_F^2}.
\end{equation}
We repeat this process $20$ times for each dataset, and report the median and standard deviation of the error obtained after the last iteration over all repetitions.

In all of our experiments, we use $\mu = \mu_{mean}$, and the given data are already centered. While all variants of ROIPCA, IPCA and CCIPCA are parameter-free, the learning rate of Oja's rule and GHA algorithms need to be tuned. We use a learning rate of the form $\eta_n = \frac{c}{n}$ with $c$ being determined by a 5-fold cross validation using a separate dataset consisting of $1000$ samples, assuming that the ground truth for this dataset is known.

In our first example, we reproduce the main example given in~\cite{cardot2018online}. In this example, each sample is drawn from a Gaussian distribution with zero mean and covariance matrix $\Gamma = \big(\min (k,l) / d \big)_{1 \leq k,l \leq m}$, with $d \in \{  100, 1000 \}$. This example simulates data with a fast decaying spectrum, meaning that it is practically low-rank. The results are summarized in Table~\ref{tbl:test1}.

\begin{table}
\centering
\begin{tabular}{|l|l|l|}
\hline
           & $d=100$ ($m=1$) & $d=1000$  ($m=1$)  \\ \hline
no update  & 7.8e-3 $\pm$ 1.8e-3                            & 6.2e-4 $\pm$ 2.2e-4                             \\ \hline
ROIPCA     & 3.4e-8 $\pm$ 4.3e-8                            & 8.7e-8 $\pm$ 4.0e-7                             \\ \hline
fROIPCA    & 3.5e-7 $\pm$ 4.4e-8                            & 8.8e-8 $\pm$ 2.6e-7                             \\ \hline
IPCA       & 4.6e-8 $\pm$ 4.5e-8                            & 8.6e-8 $\pm$ 7.5e-7                             \\ \hline
CCIPCA     & 1.3e-4 $\pm$ 1.4e-4                            & 3.3e-4 $\pm$ 2.4e-4                             \\ \hline
GHA        & 1.1e-6 $\pm$ 1.8e-6                            & 1.9e-5 $\pm$ 5.2e-5                             \\ \hline
Oja's rule & 6.3e-6 $\pm$ 6.8e-6                            & 3.5e-5 $\pm$ 1.5e-5                             \\ \hline
\end{tabular}
\caption{Accuracy of the algorithms tested using the example in~\cite{cardot2018online}. We can see that all ROIPCA variants and IPCA outperform all other algorithms in their accuracy. The accuracy of ROIPCA and fROIPCA is comparable.}
\label{tbl:test1}
\end{table}

We next demonstrate our algorithms on real-world datasets. The datasets we use are the MNIST dataset, the superconductivity dataset and the poker dataset, all described in Table~\ref{tbl:data sets}. Table~\ref{tbl:test2} summarizes the error at the final iteration of all experiments.

\begin{table}
\centering
\begin{tabular}{|l|l|l|l|}
\hline
           & MNIST  ($m=10$) & Superconductivity  ($m=2$) & Poker  ($m=5$) \\ \hline
no update  & 2.0e-1 $\pm$ 2.7e-1                           & 2.0e-3 $\pm$ 3.2e-3                                       & 2.2e-3 $\pm$ 4.2e-3                          \\ \hline
ROIPCA     & 4.9e-2 $\pm$ 2.9e-2                           & 1.4e-5 $\pm$ 1.0e-5                                       & 2.5e-7 $\pm$ 8.5e-9                          \\ \hline
fROIPCA    & 5.2e-2 $\pm$ 3.5e-2                           & 1.8e-5 $\pm$ 2.2e-5                                       & 1.0e-6 $\pm$ 1.9e-6                          \\ \hline
IPCA       & 4.9e-2 $\pm$ 2.8e-2                           & 7.4e-6 $\pm$ 1.8e-6                                       & 8.0e-7 $\pm$ 4.7e-7                          \\ \hline
CCIPCA     & 8.1e-2 $\pm$ 1.4e-2                           & 1.7e-4 $\pm$ 2.7e-4                                       & 8.5e-4 $\pm$ 7.2e-4                          \\ \hline
GHA        & 9.8e-2 $\pm$ 1.8e-2                           & 3.6e-3 $\pm$ 2.9e-3                                       & 1.6e-5 $\pm$ 4.7e-5                          \\ \hline
Oja's rule & 1.0e-1 $\pm$ 6.8e-2                           & 5.1e-3 $\pm$ 3.2e-3                                       & 1.6e-5 $\pm$ 2.2e-5                          \\ \hline
\end{tabular}
\caption{Accuracy of the algorithms tested on real-world datasets. We can see that the accuracy of ROIPCA variants and IPCA is superior to other tested algorithms. The accuracy of ROPICA and fROIPCA is comparable.}
\label{tbl:test2}
\end{table}

In our last example, we demonstrate the advantage of the ROIPCA variants for data that are not low-rank. We have seen in the previous examples that the performance of the ROIPCA variants and IPCA is comparable. When the data are not low-rank, this is no longer true due to the parameter $\mu$. As our dataset, we use Gaussian data with mean zero and covariance matrix of dimension $d = 100$ with $m=5$ eigenvalues uniformly distributed between 1 and 1.5. The rest of the eigenvalues are 1. The results are summarized in Table~\ref{tbl:test3}.

\begin{table}
\centering
\begin{tabular}{|l|l|}
\hline
no update   & 3.2e0 $\pm$ 1.3e-1 \\ \hline
ROIPCA  & 2.0e-2 $\pm$ 1.5e-1 \\ \hline
fROIPCA & 2.3e-2 $\pm$ 1.7e-1 \\ \hline
IPCA        & 5.2e-1 $\pm$ 3.6e-1 \\ \hline
CCIPCA      & 2.4e-1 $\pm$ 1.6e-1 \\ \hline
GHA         & 1.3e-1 $\pm$ 1.5e-2 \\ \hline
Oja's rule  & 8.4e-1 $\pm$ 2.3e-1 \\ \hline
\end{tabular}
\caption{Accuracy of the tested algorithms for
data which are not low-rank. We can see the advantage of ROIPCA variants over all other methods.}
\label{tbl:test3}
\end{table}


\subsection{Runtime}

We next compare the runtime of our approach to that of the best performing methods of the previous section, that is to IPCA and CCIPCA. The asymptotic runtime and space complexity of the methods tested is given in Table~\ref{tbl:complex_comp}. Our experiments show that while some methods share the same asymptotic runtime complexity, in practice, there might be  significant differences in their runtimes, depending on the data dimension~$d$. In each experiment, we draw $n = 1000$ samples from a $d$-dimensional Gaussian distribution with mean zero and a diagonal covariance matrix whose 10 largest eigenvalues are distributed uniformly at random between 1 and 2 while the remaining $d - 10$ eigenvalues are zero. The dimension $d$ varies between $100$ and $10,000$. We calculate the top 10 principal components of the entire data, and then update them using 500 additional points drawn from the same distribution, using each of the tested algorithms. We measure the mean runtime of each method per-update. The results are presented in Figure~\ref{fig:runtime_d} on a logarithmic scale.

\begin{figure}
    \centering
        \includegraphics[width=.45\textwidth]{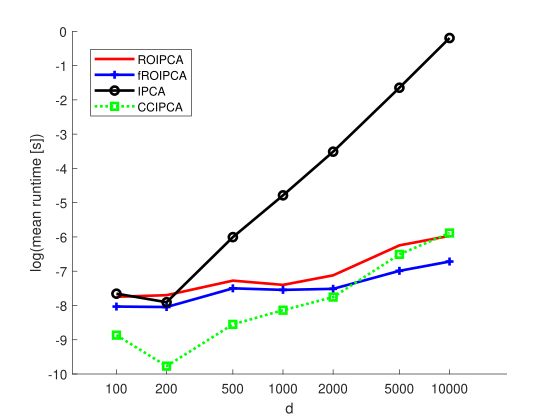}
        \caption{$\log ($mean runtime$)$ of each method as a function of the data dimension $d$. We can see that our algorithms are significantly faster than IPCA when~$d$ is larger than 200. Our fastest variant is fROIPCA, which is comparable in its runtime to CCIPCA.} \label{fig:runtime_d}
\end{figure}

\begin{table}
\centering
\begin{tabular}{|l|l|l|}
\hline
             & Runtime & Space \\ \hline
IPCA~\cite{arora2012stochastic}         & $O(m^2d)$                          & $O(md)$          \\ \hline
CCIPCA~\cite{weng2003candid}       & $O(md)$                            & $O(md)$          \\ \hline
ROIPCA & $O(m^2d)$                          & $O(md)$          \\ \hline
fROIPCA & $O(md)$                          & $O(md)$          \\ \hline
\end{tabular}
\caption{Comparison of runtime and space complexity of the online PCA algorithms. CCIPCA and fROIPCA have superior runtime by a factor of $m$. All algorithms have space complexity of $O(md)$.}
\label{tbl:complex_comp}
\end{table}

\section{Summary}
In this paper, we introduced two online PCA algorithms called ROIPCA and fROIPCA that are based on rank-one updates of the covariance matrix. We proved that our fastest variant, fROIPCA, can be interpreted as a gradient-based method with an optimal learning rate (Propositions~\ref{prpo:the_best} and~\ref{prop:froipca_conv}). We analysed theoretically the error introduced by each variant, and demonstrated numerically that all of our variants are superior in terms of accuracy compared to state-of-the-art methods such as IPCA and CCIPCA. The IPCA algorithm provides accuracy comparable to our algorithms, but is considerably slower for higher dimensions. When the data are not low-rank, our methods were demonstrated to be superior to IPCA as well. The CCIPCA algorithm is the fastest among all methods tested, but it is usually inferior in terms of accuracy. The fROIPCA algorithm is comparable in its runtime to CCIPCA but is significantly more accurate.


Our proposed method can be easily extended to the online kernel PCA setting. In this setting, the updated matrix is the kernel matrix and not the covariance matrix. Introducing a new data point requires adding a row and a column to the kernel matrix, which is a rank-two update~\cite{hoegaerts2007efficiently}. Thus, an update of the principal components of the augmented kernel matrix can be obtained by applying two consecutive rank-one updates.

\section{Proof of Proposition~\ref{prop:fast extension}} \label{app1}

Let $1 \leq i \leq m$ and let $\tau_i \in \mathbb{R}$ be a fixed parameter. We first prove that the term $U^{(m)}(\Delta_i^{(m)})^{-1}(U^{(m)})^Tv$ in the eigenvectors formula~\eqref{eqn:TEF} is approximated by
\begin{equation} \label{eqn:1st_term_app}
\bigg( \frac{1}{\ell_i - \gamma_i} - \tau_i \bigg) \langle u_i , v \rangle u_i + \tau_i (v - r),
\end{equation}
with an error of magnitude $ O \big( \max_{1 \leq j \leq m, j \neq i} \abs{\frac{1}{\ell_j - \gamma_i} - \tau_i} \big) $.

Indeed, the first term in~\eqref{eqn:TEF} can be written as
\begin{align} \label{eq:split_form}
 U^{(m)}(\Delta_i^{(m)})^{-1}(U^{(m)})^Tv &= \sum_{k=1}^{m} \frac{\langle u_k, v \rangle}{\ell_k - \gamma_i}u_k  \notag \\
 &= \frac{\langle u_i, v \rangle}{\ell_i - \gamma_i} + \sum_{k=1 , k\neq i}^{m} \frac{\langle u_k, v \rangle}{\ell_k - \gamma_i}u_k .
\end{align}
Let us replace the terms $\{\frac{1}{\ell_k - \gamma_i}\}_{k=1,k \neq i}^{m}$ by the estimate $\tau_i~\in~\mathbb{R}$ whose optimal value will be determined shortly. We get using~\eqref{eqn:TEF} the following approximation for~\eqref{eq:split_form},
 \begin{align}
& \frac{\langle u_i, v \rangle}{\ell_i - \gamma_i}u_i + \sum_{k=1 , k\neq i}^{m} \tau_i \langle u_k, v \rangle u_k \label{eqn:first_term_app} \\
  &\quad = \frac{\langle u_i, v \rangle}{\ell_i - \gamma_i}u_i + \tau_i \sum_{k=1 , k\neq i}^{m} \langle u_k, v \rangle u_k \notag \\
  &\quad = \frac{\langle u_i, v \rangle}{\ell_i - \gamma_i}u_i + \tau_i \bigg(U^{(m)}(U^{(m)})^T - u_iu_i^T \bigg)v \notag\\
    &\quad = \frac{\langle u_i, v \rangle}{\ell_i - \gamma_i}u_i + \tau_i \bigg(v - r - u_i \langle u_i , v \rangle  \bigg) \notag\\
&\quad = \bigg( \frac{1}{\ell_i - \gamma_i} - \tau_i \bigg)u_i  \langle u_i , v  \rangle + \tau_i (v - r). \notag
\end{align}
Using the orthonormality of the eigenvectors, and recalling that $z_k = \langle u_k , v \rangle $, the squared error of the approximation~\eqref{eqn:first_term_app} is given by
\begin{align}
e^2 &= \norm{\sum_{k=1 , k\neq i}^{m} \frac{\langle u_k, v \rangle}{\ell_k - \gamma_i}u_k - \sum_{k=1 , k\neq i}^{m} \tau_i\langle u_k, v \rangle u_k}^2 \label{eq:tau error}\\
&= \sum_{k=1 , k\neq i}^{m} \bigg( \frac{1}{\ell_k - \gamma_i} - \tau_i \bigg)^2 z_k^2 \label{eqn:to_minimize}\\
& \leq \max_{1 \leq j \leq m, j \neq i} \abs{\frac{1}{\ell_j - \gamma_i} - \tau_i}^2, 
\end{align}
as requested.

A reasonable choice for $\tau_i$ in Proposition~\ref{prop:fast extension} would be the $\tau_i$ that minimizes~\eqref{eqn:to_minimize}. By standard methods we get
\begin{equation} \label{eqn:best_tau2}
\tau^*_i = \frac{\sum_{k=1 , k\neq i}^{m} \frac{z_k^2}{\ell_k - \gamma_i}}{\sum_{k=1 , k\neq i}^{m} z_k^2}.
\end{equation}

\section{Proof of Proposition~\ref{prpo:better}} \label{app2}

For ease of notation, we omit the $n$ superscripts in this proof and denote $t_i^{n+1}$ by $\lambda_i$ for $1 \leq i \leq m$. Let $1 \leq i \leq m$. Following Remark~\ref{rem:scale_not_imp}, we divide $q_i^{n+1}$ in~\eqref{eqn:EigenvaectorFormula} by $\frac{ \langle q_i , x_{n+1} \rangle }{\lambda_i - t_i}$ to get using~\eqref{eq:formula_as_sum}
\begin{equation}\label{eq:pi in proof}
w_i^{n+1} = q_i + \sum_{k=1,k \neq i}^{d} \frac{ (\lambda_i - t_i) \langle q_k , x_{n+1} \rangle }{(\lambda_k - t_i) \langle q_i , x_{n+1} \rangle }q_k .
\end{equation}
Additionally, since
\begin{equation} \label{eqn:general_grad_for_proof}
 x_{n+1} - \sum_{k=1}^{i}  \langle q_k , x_{n+1} \rangle  q_k  = \sum_{k=i+1}^d  \langle q_k , x_{n+1} \rangle  q_k ,
\end{equation}
we have that~\eqref{eqn:general_grad} can be written in the form
\begin{equation} \label{eqn:general_grad_mod}
u_i^1 = q_i + \sum_{k=i+1}^d  \eta_n  \langle q_i , x_{n+1} \rangle  \langle q_k , x_{n+1} \rangle  q_k ,
\end{equation}
and similarly,~\eqref{eqn:general_grad_ext} can be written in the form
\begin{equation} \label{eqn:general_grad_ext_mod}
u_i^2 = q_i
 + \sum_{k=m+1}^d \eta_n  \langle q_i , x_{n+1} \rangle  \langle q_k , x_{n+1} \rangle  q_k .
\end{equation}
It follows using~\eqref{eq:pi in proof} and~\eqref{eqn:general_grad_mod} that
\begin{align*}
e_1^2 &= \norm{w_i^{n+1} - u_i^1}^2  \\
&= \sum_{k=1}^{i - 1} \bigg( \frac{ (\lambda_i - t_i)  \langle q_k , x_{n+1} \rangle }{(\lambda_k - t_i) \langle q_i , x_{n+1} \rangle } \bigg)^2  \\
&\qquad\qquad+ \sum_{k=i+1}^{m} \bigg( \frac{ (\lambda_i - t_i)  \langle q_k , x_{n+1} \rangle }{(\lambda_k - t_i) \langle q_i , x_{n+1} \rangle } - \eta_n  \langle q_i , x_{n+1} \rangle  \langle q_k , x_{n+1} \rangle  \bigg)^2  \\ 
&\qquad\qquad+ \sum_{k=m+1}^{d} \bigg( \frac{ (\lambda_i - t_i)  \langle q_k , x_{n+1} \rangle }{(\lambda_k - t_i) \langle q_i , x_{n+1} \rangle } - \eta_n  \langle q_i , x_{n+1} \rangle  \langle q_k , x_{n+1} \rangle  \bigg)^2 .
\end{align*}
Similarly,
\begin{align*}
e_2^2 &= \norm{w_i^{n+1}  - u_i^2}^2  \\
&= \sum_{k=1}^{i - 1} \bigg( \frac{ (\lambda_i - t_i)  \langle q_k , x_{n+1} \rangle }{(\lambda_k - t_i) \langle q_i , x_{n+1} \rangle } \bigg)^2  \\
&\qquad\qquad+ \sum_{k=i+1}^{m} \bigg( \frac{ (\lambda_i - t_i)  \langle q_k , x_{n+1} \rangle }{(\lambda_k - t_i) \langle q_i , x_{n+1} \rangle } \bigg)^2  \\ 
&\qquad\qquad+ \sum_{k=m+1}^{d} \bigg( \frac{ (\lambda_i - t_i)  \langle q_k , x_{n+1} \rangle }{(\lambda_k - t_i) \langle q_i , x_{n+1} \rangle } - \eta_n  \langle q_i , x_{n+1} \rangle  \langle q_k , x_{n+1} \rangle  \bigg)^2 .
\end{align*}
We conclude that $e_1 \geq e_2$ iff 
\begin{equation}
 \sum_{k=i+1}^{m} \bigg( \frac{ (\lambda_i - t_i)  \langle q_k , x_{n+1} \rangle }{(\lambda_k - t_i) \langle q_i , x_{n+1} \rangle } - \eta_n  \langle q_i , x_{n+1} \rangle \langle q_k , x_{n+1} \rangle  \bigg)^2  
\geq \sum_{k=i+1}^{m} \bigg( \frac{ (\lambda_i - t_i)  \langle q_k , x_{n+1} \rangle }{(\lambda_k - t_i) \langle q_i , x_{n+1} \rangle } \bigg)^2.
\end{equation}

By some manipulations we get that the above inequality is equivalent to
\begin{equation}
\eta_n \bigg( \eta_n \sum_{k=i+1}^{m}  \langle q_i , x_{n+1} \rangle ^2 \langle q_k , x_{n+1} \rangle ^2 
- 2\sum_{k=i+1}^{m} \frac{ (\lambda_i - t_i)  \langle q_k , x_{n+1} \rangle ^2}{(\lambda_k - t_i)} \bigg) \geq 0.
\end{equation}
But since $\eta_n > 0$ we get that $e_1 \geq e_2$ iff 
\begin{equation}\label{eq:eta bound}
\eta_n \geq  \frac{2\sum_{k=i+1}^{m} \frac{ (\lambda_i - t_i)  \langle q_k , x_{n+1} \rangle ^2}{(\lambda_k - t_i)}} { \sum_{k=i+1}^{m}  \langle q_i , x_{n+1} \rangle ^2 \langle q_k , x_{n+1} \rangle ^2  } .
\end{equation}
By the interlacing property~\cite{bunch1978rank}, $t_1 > \lambda_1 > \cdots > t_m > \lambda_m $ and thus $\lambda_i - t_i < 0$ and $\lambda_k - t_i > 0$ for $k > i$, and it follows that the right hand side in~\eqref{eq:eta bound} must be negative. We conclude that $e_1 \geq e_2$ for all $\eta_n > 0$.

\section{Proof of Proposition~\ref{prpo:the_best}} \label{app3}

Let $1 \leq i \leq m$. By~\eqref{eq:formula_as_sum} and Remark~\ref{rem:scale_not_imp}, the exact eigenvector update (up to normalization) is given by
\begin{equation} \label{eq:sum}
\hat{p}_i^{n+1} = \frac{t_i^{n} - t_i^{n+1} }{\langle q_i , x_{n+1} \rangle} \sum_{k=1}^{d} \frac{  \langle q_k , x_{n+1} \rangle }{t_k^{n} - t_i^{n+1}}q_k .
\end{equation}

By the orthogonality of $\{ q_k \}_{k=1}^d$, the vector $q_i^{n+1} \in \text{span} \{ q_k \}_{k \in \{ i , m+1, ..., d \}}$ that minimizes $\norm{\hat{p}_i^{n+1} - q_i^{n+1}}$ is obtained by omitting the first $m$ terms in~\eqref{eq:sum} except for the $i$'th one, that is
\begin{equation*}
q_i^{n+1} = q_i +  \frac{ t_i^{n} - t_i^{n+1}}{ \langle q_i^n , x_{n+1} \rangle } \sum_{k=m+1}^{d} \frac{  \langle q_k^n , x_{n+1} \rangle }{t_k^{n} - t_i^{n+1}}q_k^n.
\end{equation*}

\noindent This is the optimal $q_i^{n+1}$. By the rank assumption, $t_k^{n} = 0$ for $k > m$, and thus,
\begin{equation*}
q_i^{n+1} = q_i^n +  \frac{ t_i^{n+1} - t_i^n}{ \langle q_i^n , x_{n+1} \rangle  t_i^{n+1}} \sum_{k=m+1}^{d}  \langle q_k^n , x_{n+1} \rangle  q_k^n.
\end{equation*}
Since 
\begin{equation} \label{eqn:general_grad_for_proof2}
\sum_{k=m+1}^d  \langle q_k^n , x_{n+1} \rangle  q_k^n  = x_{n+1} - \sum_{k=1}^m  \langle q_k^n , x_{n+1} \rangle  q_k^n,
\end{equation}
we have that
\begin{equation*}
q_i^{n+1} = q_i^n +  \frac{ t_i^{n+1} - t_i^{n}}{ \langle q_i^n , x_{n+1} \rangle  t_i^{n+1}} \Big( x_{n+1} - \sum_{k=1}^m  \langle q_k^n , x_{n+1} \rangle  q_k^n \Big),
\end{equation*}
or equivalently, denoting $\eta_n = \frac{t_i^{n+1} - t_i^{n}}{ \langle q_i^n , x_{n+1} \rangle ^2 t_i^{n+1}}$
\begin{equation} \label{eqn:read_opt}
q_i^{n+1} = q_i^n +  \eta_n \Big(  \langle q_i^n , x_{n+1} \rangle x_{n+1} -  \langle q_i^n , x_{n+1} \rangle  \sum_{k=1}^{m}  \langle q_k^n , x_{n+1} \rangle q_k^n \Big),
\end{equation}
which has the form of~\eqref{eqn:general_grad_ext}. We therefore read the optimal learning rate from~\eqref{eqn:read_opt},  concluding the proof.

\section{Proof of Proposition~\ref{prop:froipca_conv}} \label{ap_conv}

For simplicity, we will prove the case $m = 1$. Denote by $t^{n}$ the largest eigenvalue of $X_n^TX_n$. The proof for the general case is similar. The term $ \frac{1}{ \langle q_n , x_{n+1} \rangle ^2}$ in~\eqref{eqn:opr_eta} is bounded by assumption and hence does not affect the asymptotic behavior of $\eta^*_n$~\eqref{eqn:opr_eta}. Hence, in order to prove that the learning rate~\eqref{eqn:opr_eta} satisfies~\eqref{eqn:suf_cond}, it is sufficient prove that for large enough $N \in \mathbb{N}$, there exist constants $c_1,c_2 > 0$ so that for $n > N$
\begin{equation} \label{eqn:what_we_want}
    \frac{c_1}{n} < 1 - \frac{t^n}{t^{n+1}} = \frac{t^{n+1} - t^n}{t^{n+1}} < \frac{c_2}{n},
\end{equation}
with high probability. 

We note that the denominator in the middle term of~\eqref{eqn:what_we_want} satisfies for all $\ell \in \mathbb{R}$
\begin{equation} \label{eqn:mehane}
    t^{n+1} = (n+1) \Bigg( \frac{t^{n+1}}{n+1} - \ell \Bigg) + (n+1) \ell .
\end{equation}
We will use the following lemma to determine the asymptotic behaviour of the denominator. 

\begin{lemma} \label{lemma:pert_eig}
Let $\{ x_1, x_2, ...\} \in \mathbb{R}^d$ be drawn i.i.d. from some distribution with mean zero and a covariance matrix $\Sigma$. Let $\{  \ell_{i} \}_{i=1}^m $ be the eigenvalues of $\Sigma$ ordered in descending order. Denote by $S_n \in \mathbb{R}^{m \times m}$ the sample covariance matrix of the first $n$ samples, and by $\{ s_i^n \}_{i=1}^{m}$ its eigenvalues. Then, for $1 \leq i \leq m$
\begin{equation}
    s_i^n -\ell_i = O_p \bigg( \frac{1}{\sqrt{n}} \bigg).
\end{equation}
\end{lemma}

\noindent \emph{Proof.} It is known that the sample covariance is a $\sqrt{n}$-consistent estimator for the population covariance~\cite{amemiya1985advanced}, and hence
\begin{equation}
     \norm{\frac{X_n^TX_n}{n} - \Sigma} = O_p \bigg( \frac{1}{\sqrt{n}} \bigg).
\end{equation}
By perturbation theory~\cite{stewart1990matrix},
\begin{equation}
    \abs{s_i^n -\ell_i} \leq \norm{\frac{X_n^TX_n}{n} - \Sigma} = O_p \bigg( \frac{1}{\sqrt{n}} \bigg),
\end{equation}
and hence,
\begin{equation}
    s_i^n -\ell_i = O_p \bigg( \frac{1}{\sqrt{n}} \bigg).
\end{equation}

Back to the proof of the proposition. Since $\frac{t^{n+1}}{n+1}$ is the largest eigenvalue of $\frac{X_{n+1}^TX_{n+1}}{n+1}$,  and by using $\ell$ of~\eqref{eqn:mehane} to be the largest eigenvalue of the the population covariance $\Sigma$, we obtain by Lemma~\ref{lemma:pert_eig} that the term $\big( \frac{t^{n+1}}{n+1} - \ell \big)$ in~\eqref{eqn:mehane} is $O_p\big( \frac{1}{\sqrt{n}}\big)$. Hence $t^{n+1} = O_p(\sqrt{n})+ (n+1) \ell$ which means that with high probability, $t^{n+1} = \Theta(n)$ and hence there exist constants $d_1, d_2 >0$ so that $ d_1 n < t^{n+1} < d_2 n$.

We now determine the asymptotic behaviour of the numerator. Since $t^{n+1}$ is the eigenvalue of the matrix resulting from perturbing the matrix $X_n^TX_n$ by the rank-$1$ perturbation $x_{n+1}x_{n+1}^T$, we get from classical perturbation results that
\begin{equation}
    t^{n+1} = t^n + (q^{n})^T (x_{n+1}x_{n+1}^T) q^n + O \Big( \norm{x_{n+1}x_{n+1}^T}^2 \Big) = t^n +  \langle x_{n+1}, q^n  \rangle^2 + O\Big( \norm{x_{n+1}x_{n+1}^T}^2 \Big).
\end{equation}
Consequently, since $\norm{x_{n+1}x_{n+1}^T}^2 = \norm{x_{n+1}}^4$, we get that the numerator is
\begin{equation} \label{eqn:headache}
    t^{n+1} - t^{n} =  \langle x_{n+1},  q^n  \rangle^2 + O \big( \norm{x_{n+1}}^4 \big).
\end{equation}
By assumption, $\norm{x_{n+1}} = O_p(1)$ and hence $\norm{x_{n+1}}^4 = O_p(1)$. Furthermore, by Cauchy-Schwartz, we have that $\langle x_{n+1},  q^n  \rangle^2 \leq \norm{x_{n+1}}^2 \norm{q^n}^2 = \norm{x_{n+1}}^2 = O_p(1)$. Hence, with high probability, we have that $t^{n+1} - t^{n} < d_3$ for some $d_3 > 0$.
Additionally, since on the right hand side of~\eqref{eqn:headache} we have by assumption $ \langle x_{n+1},  q^n  \rangle^2 > c^2$ , and since $\norm{x_{n+1}}^4 = O_p(1)$ we conclude that with high probability, we have that $t^{n+1} - t^{n} > d_4$ for some $d_4 > 0$.

We combine all of the above to conclude~\eqref{eqn:what_we_want}.

\section*{Funding}
This research was supported by the European Research Council (ERC) under the European Union's Horizon 2020 research and innovation programme (grant
agreement 723991 - CRYOMATH)


\bibliographystyle{plain} 
\bibliography{roipcabib.bib}


\end{document}